\documentclass{article} 
\usepackage{iclr2025_conference,times}

\usepackage{amsmath,amsfonts,bm}

\def\eqref#1{equation~\ref{#1}}

\def\1{\bm{1}}

\DeclareMathAlphabet{\mathsfit}{\encodingdefault}{\sfdefault}{m}{sl}
\SetMathAlphabet{\mathsfit}{bold}{\encodingdefault}{\sfdefault}{bx}{n}

\usepackage[utf8]{inputenc} 
\usepackage[T1]{fontenc}    
\usepackage{hyperref}       
\usepackage{url}            
\usepackage{booktabs}       
\usepackage{amsfonts}       
\usepackage{nicefrac}       
\usepackage{microtype}      
\usepackage{xcolor}         
\usepackage{amsthm}
\usepackage{amsmath}
\usepackage{amssymb}
\usepackage{stfloats}
\usepackage{multirow}
\usepackage{graphicx}
\usepackage{graphicx} 
\usepackage{wrapfig}
\usepackage{enumitem}

\usepackage{adjustbox}
\usepackage{amsthm}
\usepackage{amsmath}
\usepackage{adjustbox}
\usepackage{algorithm}
\usepackage{algorithmic}
\newtheorem{theorem}{Theorem}
\newtheorem{Definition}{Definition}

\title{MAny: Merge Anything for Multimodal Continual Instruction Tuning}

\iclrfinalcopy
\author{
    Zijian Gao\textsuperscript{1,3},
    Wangwang Jia\textsuperscript{1,2},
    Xingxing Zhang\textsuperscript{4},
    Pengfei Qian\textsuperscript{1,3},
    \textbf{Tao Sun}\textsuperscript{1,2}, 
    \textbf{Bo Ding}\textsuperscript{1,3},\\
    \textbf{ Yong Dou}\textsuperscript{1,2},
    \textbf{Huaimin Wang\textsuperscript{1,3}},
    \textbf{Kele Xu}\textsuperscript{1,3}\thanks{Corresponding author}\\
    \textsuperscript{1}College of Computer Science and Technology, National University of Defense Technology\\
    \textsuperscript{2}National Key Laboratory of Parallel and Distributed Computing, National University of Defense Technology\\
    \textsuperscript{3}State Key Laboratory of Complex \& Critical Software Environment\\
    \textsuperscript{4}School of Computer Science, Tsinghua University\\
    {\tt\small \{gaozijian19,wangwangjia,pengfeiqian,yongdou,hmwang,xukelele\}@nudt.edu.cn}\\
    {\tt\small xxzhang1993@gmail.com, suntao.saltfish@outlook.com}
}

\begin{document}

\maketitle

\begin{abstract}
Multimodal Continual Instruction Tuning (MCIT) is essential for sequential task adaptation of Multimodal Large Language Models (MLLMs) but is severely restricted by catastrophic forgetting. While existing literature focuses on the reasoning language backbone, in this work, we expose a critical yet neglected dual-forgetting phenomenon across both perception drift in Cross-modal Projection Space and reasoning collapse in Low-rank Parameter Space.
To resolve this, we present \textbf{MAny} (\textbf{M}erge \textbf{Any}thing), a framework that merges task-specific knowledge through \textbf{C}ross-modal \textbf{P}rojection \textbf{M}erging (\textbf{CPM}) and \textbf{L}ow-rank \textbf{P}arameter \textbf{M}erging (\textbf{LPM}). Specifically, CPM recovers perceptual alignment by adaptively merging cross-modal visual representations via visual-prototype guidance, ensuring accurate feature recovery during inference. Simultaneously, LPM eliminates mutual interference among task-specific low-rank modules by recursively merging low-rank weight matrices. By leveraging recursive least squares, LPM provides a closed-form solution that mathematically guarantees an optimal fusion trajectory for reasoning stability. Notably, MAny operates as a training-free paradigm that achieves knowledge merging via efficient CPU-based algebraic operations, eliminating additional gradient-based optimization beyond initial tuning. Our extensive evaluations confirm the superior performance and robustness of MAny across multiple MLLMs and benchmarks. Specifically, on the UCIT benchmark, MAny achieves significant leads of up to 8.57\% and 2.85\% in final average accuracy over state-of-the-art methods across two different MLLMs, respectively.
\end{abstract}

\section{Introduction}

Recent advancements in multimodal large language models (MLLMs) \cite{yin2024survey,bai2023versatile,zhu2023minigpt} have significantly extended traditional LLM \cite{touvron2023llama} capabilities through sophisticated cross-modal alignment \cite{liu2023visual} and autoregressive generation. To bridge the gap between general-purpose pretraining and domain-specific applications, multimodal instruction tuning \cite{zhang2026instruction,tong2025metamorph} aligns model behavior with user intent. However, real-world scenarios typically require models to continuously adapt to diverse, sequentially arriving datasets to meet evolving demands. In Multimodal Continual Instruction Tuning (MCIT) \cite{chen2024coin} scenario, MLLMs must efficiently absorb new knowledge while preserving previously acquired skills. Unfortunately, severe catastrophic forgetting \cite{liu2025continual,gao2025maintaining,gao2025knowledge,gao2025rethinking,GAO2024106513,gao2024stabilizing} hinders this process, remaining a critical and unresolved challenge in the field.

To combat forgetting, current methods face difficult trade-offs: Mixture of Experts (MoE) based architectures \cite{guo2025hide,yu2025progressive} require heavy computation, regularization \cite{chen2025sefe,zeng2025modalprompt} limits model flexibility, and replay approaches \cite{li2025multimodal,lee2025oasis} raise privacy and storage concerns. Crucially, these methods mainly share a fundamental oversight: they focus almost entirely on the anti-forgetting ability of language backbone, assuming the multimodal projector is immune to forgetting. 

To pinpoint the root causes of forgetting, we conduct a decoupling analysis on the UCIT \cite{guo2025hide} benchmark, as visualized in Figure \ref{fig:motivation_analysis}. As shown, the naive fine-tuning baseline (green line) exhibits a severe performance collapse compared to the oracle (black dashed line). Crucially, even when replacing the drifted final-task LoRA with task-specific parameters (orange line), a substantial gap persists if the drifted final-task projector is maintained. For instance, on the Viz-cap~\cite{gurari2018vizwiz} and IconQA~\cite{lu2021iconqa} datasets using the LLaVA~\cite{liu2024improved} and InternVL~\cite{chen2024internvl} models, the performance fails to reach the oracle level despite having lossless reasoning weights. This gap proves that the projector itself is also a primary source of forgetting. Surprisingly, on the CLEVR~\cite{lindstrom2022clevr} dataset using the LLaVA model, restoring only the projector (blue line) yields higher accuracy than restoring the LoRA (orange line). This confirms that for specific visual-reasoning tasks, perceptual misalignment can be more damaging than reasoning collapse. Finally, our proposed MAny (red lines) successfully resolves both failure modes. These findings confirm a dual-forgetting problem in MLLMs: perception drift in the cross-modal projection space and reasoning collapse in the low-rank parameter space.


\begin{figure*}[t]
    \centering
 \includegraphics[width=.90\linewidth]{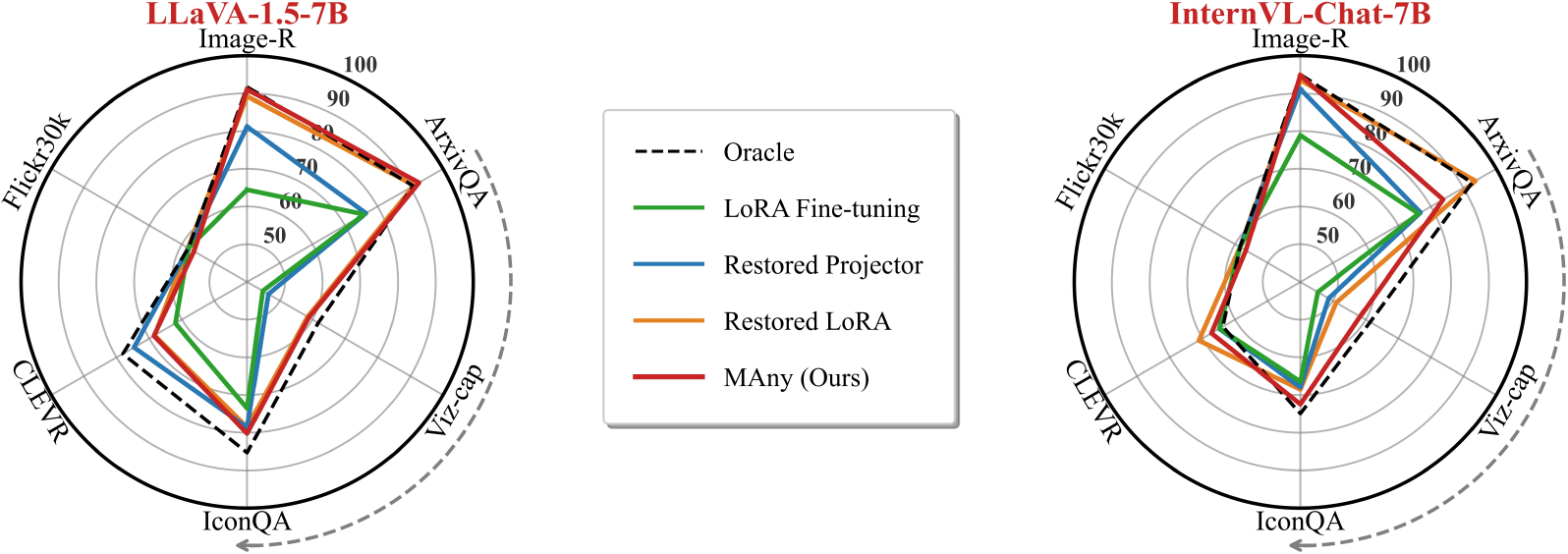}
 \vspace{-0.25cm}
\caption{
    \textbf{Performance diagnosis on UCIT benchmark for LLaVA-1.5-7B and InternVL-Chat-7B during sequential LoRA fine-tuning. Tasks follow a clockwise sequence from Image-R.} 
}
    \label{fig:motivation_analysis}
     \vspace{-0.6cm}
\end{figure*}

Driven by these insights, we introduce \textbf{MAny} (\textbf{M}erge \textbf{Any}thing), a simple yet effective framework designed to fix both the perception and reasoning spaces. To reverse the perception drift seen in our experiments, we propose Cross-modal Projection Merging (CPM). Instead of using a single projector, CPM maintains lightweight, task-specific projection layers. These layers are highly parameter-efficient and ensure fast inference by adaptively merging visual features. Guided by visual prototypes, CPM can automatically recognize which task an image belongs to and recover the most accurate visual-language alignment. Simultaneously, to stop the reasoning collapse within the LLM, we develop Low-rank Parameter Merging (LPM). Unlike simple weight averaging that often causes severe parameter interference, LPM uses the proposed Recursive Least Squares algorithm to find a conflict-minimizing way to merge task-specific low-rank modules. This provides a closed-form solution that mathematically ensures the new model remembers all tasks while minimizing interference within the low-rank parameter space. Crucially, by performing dual-merging via fast CPU-based algebraic operations, MAny eliminates the need for GPU training or historical data, offering a lightweight solution for real-world multimodal learning.

In summary, our key contributions can be summarized as follows:
(1) We show a neglected dual-forgetting phenomenon, proving that projector drift is a major cause of failure alongside LLM forgetting.
(2) We introduce CPM, a plug-and-play module that adaptively merges task-specific visual features to restore perceptual alignment.
(3) We develop LPM, which utilizes a recursive solution based on Recursive Least Squares to achieve conflict-minimizing parameter merging.
(4) The proposed MAny framework achieves state-of-the-art performance on the UCIT and MLLM-DCL benchmarks, demonstrating strong knowledge retention without any GPU-based training.

\section{Related Work}

\textbf{Multimodal Large Language Models.} Driven by the success of LLMs \cite{touvron2023llama,zhang2023llama} in natural language processing \cite{nirenburg1993progress,min2023recent}, MLLMs \cite{bai2023qwen,zhu2023minigpt,dai2023instructblip} have emerged to bridge the gap between vision and language. Architecturally, MLLMs rely on a frozen vision encoder and a cross-modal alignment module (such as a projector \cite{liu2023visual} or a cross-attention mechanism \cite{dai2023instructblip}) to align visual features with the language space. While achieving impressive performance \cite{liu2024mmbench,zhang2024mm,zhao2025chartcoder}, most existing MLLMs are trained in static multi-task settings. They struggle to adapt to continually arriving data streams \cite{shi2021overcoming}, where learning new knowledge often leads to catastrophic forgetting of previously acquired capabilities.

\textbf{Continual Instruction Tuning for MLLMs.} 
Existing solutions for overcoming forgetting in MCIT \cite{liu2023visual,longpre2023flan} generally fall into three categories: replay-based, regularization-based, and architecture-based methods. 
Replay-based methods \cite{li2025multimodal,lee2025oasis} rehearse historical or synthetic data, introducing severe storage/computation overheads and privacy risks.
Regularization-based approaches \cite{zeng2025modalprompt,chen2025sefe} penalize drastic parameter or cross-modal shifts to bypass memory burdens \cite{wang2023orthogonal}.
Architecture-based strategies \cite{guo2025hide,wang2023orthogonal} isolate new knowledge via task-specific modules or dynamic routing. While inherently preserving prior parameter spaces, they face critical challenges in precise task-agnostic routing. Crucially, these paradigms focus almost exclusively on the language backbone, assuming the multimodal projector is immune to forgetting. 
Our work challenges this assumption by revealing and addressing the drift in the projection space.

\textbf{Model Merging.} Model merging consolidates knowledge from multiple task-specific models into a unified architecture without costly retraining. Early works relied on weight averaging \cite{matena2022merging} or feature distance minimization \cite{jin2022dataless}, while recent methods \cite{yadav2023ties,du2024parameter} utilize task vectors \cite{ilharco2022editing} to integrate modules like LoRA \cite{hu2022lora}. To mitigate negative transfer, advanced training-free paradigms \cite{sun2025towards} have begun to explicitly minimize layer-wise feature drift. However, these methods are predominantly tailored for static, one-shot merging within the NLP domain, lacking a mechanism to handle continual learning scenarios. In contrast, our work is the first to bridge model merging with the MCIT paradigm. Moving beyond one-time integration, we develop a recursive merging paradigm that allows MLLMs to incrementally absorb new tasks while mathematically ensuring an optimal fusion of task-specific parameters across the entire task sequence.

\section{Preliminary}
\label{Preliminary}

\subsection{Multimodal Continual Instruction Tuning}
\label{sec:preliminary_cit}
We consider an MLLM \cite{liu2023visual} composed of three primary components: a vision encoder $f_v$ parameterized by $\Phi_v$, a cross-modal projector $\mathcal{P}$ parameterized by $\Phi_\mathcal{P}$, and a large language model (LLM). In the context of MCIT, the model is required to learn a sequence of $n$ tasks, denoted as $\mathcal{T} = \{1, 2, \dots, n\}$, which arrive sequentially. For each task $t \in \mathcal{T}$, the corresponding dataset is defined as $\mathcal{D}_t = \{(x_v^{t,j}, x_{ins}^{t,j}, x_{ans}^{t,j})\}_{j=1}^{N_t}$, where $x_v^{t,j}$, $x_{ins}^{t,j}$, and $x_{ans}^{t,j}$ represent the input raw image, instruction tokens, and answer tokens, respectively, and $N_t$ is the total number of samples in $\mathcal{D}_t$.

For a given sample $(x_v, x_{ins}, x_{ans}) \in \mathcal{D}_t$, the vision encoder first extracts visual features from the raw image. Specifically, we follow the prevailing paradigm \cite{liu2023visual} to extract dense spatial feature $\mathbf{Z}_v^{\text{s}}$ from the penultimate layer of $f_v$, which are then mapped into the LLM's embedding space via the projector, yielding projected visual tokens $\mathbf{P}_v = \mathcal{P}(\mathbf{Z}_v^{\text{s}}; \Phi_\mathcal{P})$. The LLM subsequently processes the concatenated sequence of visual tokens $\mathbf{P}_v$ and the embedded instruction tokens $x_{ins}$ to generate the response. Following standard paradigms, we keep the vision encoder parameters $\Phi_v$ frozen to preserve foundational visual representations. The trainable parameters $\Theta = \{\Phi_\mathcal{P}, \Delta\Phi_l\}$ typically include the projector weights $\Phi_\mathcal{P}$ and a set of Parameter-Efficient Fine-Tuning (PEFT) weights $\Delta\Phi_l$ (e.g., LoRA \cite{hu2022lora}) added to the LLM. The model is optimized via an autoregressive next-token prediction loss:$$\mathcal{L}_{MLLM} = - \sum_{k=1}^{N_{ans}} \log p(x_{ans, k} \mid \mathbf{P}_v, x_{ins}, x_{ans, <k}; \Theta),$$where $N_{ans}$ is the sequence length of the answer. Upon encountering task $t$, the objective is to minimize the empirical risk over $\mathcal{D}_t$ while ensuring that $\Theta$ retains the knowledge acquired from preceding tasks $\{1, \dots, t-1\}$.

\subsection{Perception Drift and Reasoning Collapse}
\label{sec:preliminary_feature_drift}
To empirically motivate our approach, we investigate the internal dynamics of forgetting in MLLMs. As visualized in Figure~\ref{fig:feature_drift}, our analysis reveals a distinct dual-forgetting pattern that provides a mechanistic explanation for the performance drops diagnosed in Figure~\ref{fig:motivation_analysis}. Specifically, we examine the representation drift in both the projector and the LLM layers to identify the root causes of failure in sequential instruction tuning.
Figure~\ref{fig:feature_drift}(a) evaluates the stability of the cross-modal projection space. It measures the cosine similarity between the projector’s output features in their oracle state (immediately after training on task $i$) and the features extracted after learning subsequent tasks $j > i$, using the exact samples from task $i$. In parallel, Figure~\ref{fig:feature_drift}(b) focuses on the internal layers of the LLM hierarchy, visualizing the similarity between the features of task $i$ at its oracle state and those extracted by the final model (after $n$ tasks). This analysis adds a depth dimension to observe how reasoning structures evolve across different model layers.

\begin{figure*}[t]
    \centering
    \includegraphics[width=1.0\linewidth]{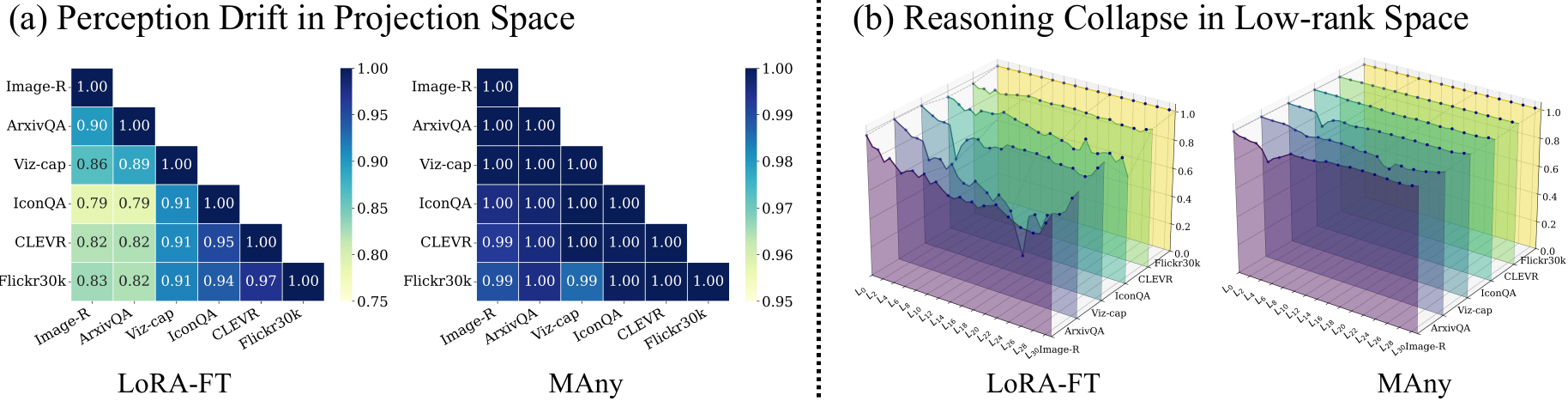}
    \vspace{-0.55cm}
    \caption{\textbf{Visualizing Perception Drift and Reasoning Collapse.} 
    (a) Cosine similarity of projector features across tasks. (b) Layer-wise feature similarity between oracle states and the final model. In both aspects, MAny effectively mitigates internal representation drift.
    }
    \label{fig:feature_drift}
    \vspace{-0.6cm}
\end{figure*}

Our observations confirm two distinct failure modes that hinder successful continual learning. As shown in Figure~\ref{fig:feature_drift}(a), naive LoRA fine-tuning (LoRA-FT) suffers from a noticeable perception drift, where the similarity for initial tasks steadily drops to approximately 0.83. This drift disrupts the vision-language alignment, causing the projector to misinterpret visual inputs from earlier learning stages. In stark contrast, Figure~\ref{fig:feature_drift}(b) reveals a severe reasoning collapse within the LLM layers. For LoRA-FT, we observe a consistent and significant decline in feature similarity across all model layers. This degradation is particularly pronounced for the earliest trained tasks, reflecting the cumulative interference that compromises the model's reasoning stability over the task sequence. Conversely, our proposed MAny demonstrates superior stability across both the cross-modal projection space and the hierarchical LLM layers. By maintaining feature consistency, our framework effectively overcomes both perception drift and reasoning collapse throughout the entire task sequence, ensuring the model's internal representations remain robust.

\section{Method}
\subsection{Decoupling Perception from Reasoning}
\label{sec:method_architecture}
To resolve the structural conflicts and divergent representation dynamics identified in Sec.~\ref{sec:preliminary_feature_drift}, we propose a dual-track architecture that explicitly decouples perception from reasoning. Our framework independently stabilizes these functional spaces through the Perceptual Track (Cross-modal Projection Merging, detailed in Sec.~\ref{sec:method_routing}) and the Reasoning Track (Low-rank Parameter Merging, detailed in Sec.~\ref{sec:method_recursive_fusion}). This component-aware design ensures that both perceptual alignment and reasoning stability are preserved throughout the task sequence, alleviating the dual-forgetting inherent in standard MLLM continual learning paradigms. Figure~\ref{fig:method_architecture} shows the overall design of our MAny.

\subsection{Cross-modal Projection Merging}
\label{sec:method_routing}
To alleviate the perception drift, we introduce Cross-modal Projection Merging (CPM) and move away from the shared-projector bottleneck and instead treat the projection space as a collection of task-specific alignment layers. For each task $i \in \{1, \dots, t\}$, we maintain a task-specific lightweight projector $\mathcal{P}_{i}: \mathbb{R}^{d_v} \to \mathbb{R}^{d_l}$. Since these projectors are typically implemented as Multi-Layer Perceptrons (MLPs), the additional parameter overhead remains minimal compared to the LLM backbone, ensuring memory efficiency throughout the task sequence.
To enable task-agnostic inference, CPM employs a dynamic merging strategy that adaptively activates these projectors based on the visual context. During the task $t$, we compute a task-representative prototype $\mathbf{\mu}_t$ by aggregating visual features through the frozen vision encoder. Specifically, we utilize the 0-th index feature (i.e., the [CLS] token) from the last layer of $f_v$ as the global representation $\mathbf{Z}_v^{\text{g}}$ and compute prototype 
$\mathbf{\mu}_t = \frac{1}{N_t} \sum_{j=1}^{N_t} \mathbf{Z}_v^{\text{g}, j}$.
This prototype $\mathbf{\mu}_t$ serves as a stable anchor in the visual domain, capturing the specific distribution of task $t$ without requiring any textual metadata.

During inference, the model encounters a query image $x_v$ with an unknown task identity. We first extract its global semantic feature $\mathbf{Z}_v^{\text{g}}$ from the last layer and dense spatial feature $\mathbf{Z}_v^{\text{s}}$ from the penultimate layer of the vision encoder. We then measure the cosine similarity $s_i$ between the query's global feature and each stored prototype $\mathbf{\mu}_i$ for $i \in \{1, \dots, t\}$:
\begin{equation}
\label{eq:similarity}
s_i = \frac{(\mathbf{Z}_v^{\text{g}})^\top \mathbf{\mu}_i}{\| \mathbf{Z}_v^{\text{g}}\| \|\mathbf{\mu}_i\|}.
\end{equation}
These similarities are then normalized into merging weights $w_i$ using a temperature-scaled softmax operation. The final projected visual tokens $\mathbf{P}_v$ are computed as an adaptive merging of task-specific projectors acting on the spatial feature:
\begin{equation}
\label{eq:merging_projection}
w_i = \frac{\exp(s_i / \eta)}{\sum_{j=1}^t \exp(s_j / \eta)}, \quad \mathbf{P}_v = \sum_{i=1}^t w_i \mathcal{P}_{i}(\mathbf{Z}_v^{\text{s}}),
\end{equation}
where $\eta$ is a temperature hyperparameter, which is empirically set to $0.1$. This soft merging mechanism ensures robustness: for samples with ambiguous boundaries, the model can adaptively blend alignment knowledge from multiple relevant projectors. By isolating task-specific mappings and merging them only at test time, CPM effectively prevents the overwriting of prior alignments, ensuring the stability of the vision-language interface throughout the task sequence.

\begin{figure*}[t]
    \centering
    \includegraphics[width=1.0\linewidth]{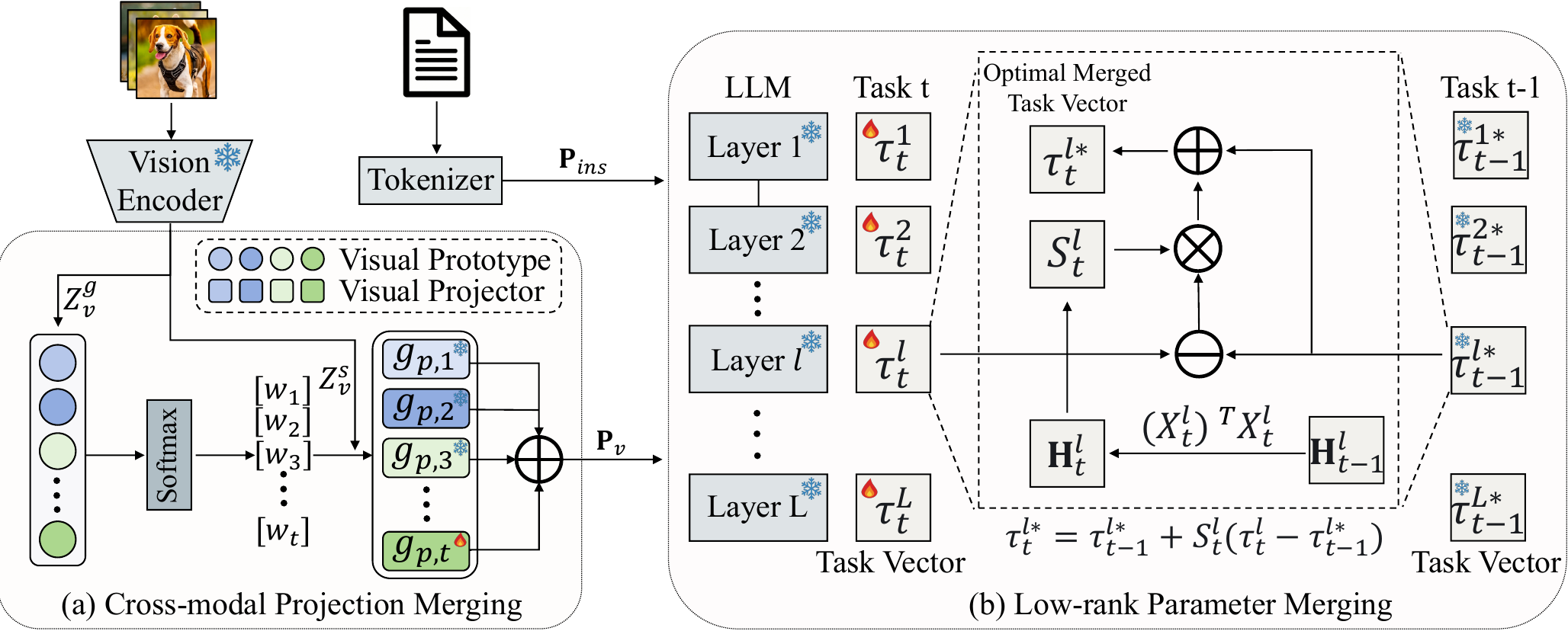}
    \vspace{-0.5cm}
    \caption{Overview of MAny. The dual-track design explicitly decouples perception from reasoning.}
    \label{fig:method_architecture}
    \vspace{-0.4cm}
\end{figure*}

\subsection{Low-rank Parameter Merging}
\label{sec:method_recursive_fusion}
While CPM preserves perceptual alignments, the internal reasoning layers of the LLM still suffer from severe reasoning collapse.
Here, we propose Low-Rank Parameter Merging (LPM), which recursively consolidates task-specific LoRA weights into a consolidated global task vector.

\subsubsection{Parameter-Efficient Task Vectors}
\label{sec:preliminary_task_vectors}
To continually incorporate the new LoRA weights in each layer of the LLM, we formulate our solution through the lens of model merging \cite{sun2025towards,ilharco2022editing}. Consider the $L$-layer pre-trained LLM $h_l$ with frozen weights $W_{\text{pre}} = \{W_{\text{pre}}^l\}_{l=1}^L$, where $W_{\text{pre}}^l \in \mathbb{R}^{d_{\text{out}} \times d_{\text{in}}}$. Following the parameterization $\Theta = \{\Phi_\mathcal{P}, \Delta\Phi_l\}$ defined in Sec.~\ref{sec:preliminary_cit}, we implement the reasoning track by embedding LoRA modules into the linear layers of the LLM.
For a specific layer $l$, the task-specific update for task $t$ is represented as a low-rank task vector $\tau_t^l = B_t^l A_t^l$, where $B_t^l \in \mathbb{R}^{d_{\text{out}} \times r}$ and $A_t^l \in \mathbb{R}^{r \times d_{\text{in}}}$. Critically, following the LoRA \cite{hu2022lora} fine-tuning strategy used in most MLLMs \cite{liu2024improved}, we also employ a strictly bias-free LoRA architecture to ensure linear additivity. Under our PEFT regime, since the LLM backbone $W_{\text{pre}}$ remains frozen, the task vector for the $l$-th layer is equivalent to the learned low-rank increment:
\begin{equation}
\tau_t^l = W_t^l - W_{\text{pre}}^l = \Delta W_t^l = B_t^l A_t^l.
\end{equation}
This design allows us to reframe the challenge of continual learning as a layer-wise model merging problem in the parameter subspace: our objective is to continuously integrate newly acquired task vectors $\tau_t^l$ into a consolidated global task vector $\tau^{l*}$, thereby achieving exemplar-free knowledge consolidation while minimizing interference with preceding tasks.

\subsubsection{Objective: Resolving Layer-wise Task Vector Conflicts}
While the task vector formulation allows for layer-wise merging, a straightforward arithmetic summation (e.g., $\tau^l = \sum_{i=1}^t \tau_i^l$ \cite{ilharco2022editing}) ignores the fact that different task vectors often occupy interfering directions in the weight space. This conflict manifests as negative transfer \cite{yadav2023ties,sun2025task}, where the performance degradation on task $i$ is formally defined as the loss gap between the merged model $W_\text{merge} = W_{\text{pre}} + \tau$ and the task-specific fine-tuned model $W_i = W_{\text{pre}} + \tau_i$: 
\begin{equation}
\begin{adjustbox}{max width = 0.93\linewidth}
$\displaystyle
    \Delta \mathcal{L}_i = \mathbb{E}_{(x_v, x_{ins}, x_{ans}) \sim \mathcal{D}_i} [ \mathcal{L}(W_\text{merge}; x_v, x_{ins}, x_{ans}) ] - \mathbb{E}_{(x_v, x_{ins}, x_{ans}) \sim \mathcal{D}_i} [ \mathcal{L}(W_i; x_v, x_{ins}, x_{ans}) ].
    $
\end{adjustbox}
\end{equation}
Our goal is to find the optimal task vectors $\tau^{*}$ that jointly minimize this gap $\Delta \mathcal{L}_i$ across all encountered tasks.
Directly analyzing this global degradation is intractable due to the complex hierarchical structure of MLLMs. However, recent studies \cite{sun2025cat} demonstrate that this global conflict is strictly bounded by the accumulation of layer-wise feature drift:
\begin{equation}
|\Delta \mathcal{L}_i| \le \beta \sum_{l=1}^L \left( \prod_{m=l+1}^L \gamma_m \right) \|\Delta f_i^l\|_F,
\end{equation}
where $\beta$ and $\gamma_m$ are the Lipschitz constants of the loss function and the $m$-th layer, respectively, and $\Delta f_i^l = f_i^l(W_{\text{pre}}^l + \tau^l) - f_i^l(W_{\text{pre}}^l + \tau_i^l)$ denotes the feature drift induced at layer $l$. This theorem establishes a critical guideline: mitigating global reasoning collapse inherently requires minimizing $\|\Delta f_i^l\|_F$ layer by layer.
Leveraging the strictly bias-free LoRA architecture established in Sec.~\ref{sec:preliminary_task_vectors}, the feature transformation at any linear layer $l$ simplifies to a pure matrix multiplication: $f_i^l(W^l) = \mathbf{X}_i^l W^l$, where $\mathbf{X}_i^l$ denotes the input feature matrix for task $i$. Substituting this into the drift definition, the feature drift at layer $l$ is expressed as:
\begin{equation}
\Delta f_i^l = \mathbf{X}_i^l(W_{\text{pre}}^l + \tau^l) - \mathbf{X}_i^l(W_{\text{pre}}^l + \tau_i^l) = \mathbf{X}_i^l(\tau^l - \tau_i^l).
\end{equation}
To maximize knowledge retention up to task $t$, the goal of the knowledge consolidation mechanism is to find an optimal merged vector $\tau_t^{l*}$ that incorporate the historical task vectors $\{\tau_1^l, \dots, \tau_{t-1}^l\}$ with the current vector $\tau_t^l$ while minimizing the cumulative feature drift. Assuming that all historical features $\{\mathbf{X}_i^l\}_{i=1}^t$ and task vectors $\{\tau_i^l\}_{i=1}^t$ are available simultaneously, we define the following equivalent global objective:
\begin{equation}
\label{eq:lpm_optimization_goal}
\tau_t^{l*} = \arg \min_{\tau^l} \sum_{i=1}^{t} \| \mathbf{X}_i^l (\tau^l - \tau_i^l) \|_F^2.
\end{equation}

By setting the gradient of Eq.~\eqref{eq:lpm_optimization_goal} with respect to $\tau^{l*}_t$ to zero, we obtain the batch least squares solution:
\begin{equation}
\label{eq:batch_solution_final}
\tau_t^{l*} = \left( \sum_{i=1}^t \mathbf{X}_i^{l\top} \mathbf{X}_i^l \right)^{-1} \sum_{i=1}^t \mathbf{X}_i^{l\top} \mathbf{X}_i^l \tau_i^l.
\end{equation}
Directly computing Eq.~\eqref{eq:batch_solution_final} necessitates joint access to all historical features $\{\mathbf{X}_i^l\}_{i=1}^t$ and task vectors $\{\tau_i^l\}_{i=1}^t$, which is structurally intractable in exemplar-free continual learning. 

\subsubsection{Exemplar-free Recursive Parameter Merging}
\label{sec:Recursive_Merging}
To satisfy the exemplar-free constraint, we reformulate the batch solution in Eq.~\eqref{eq:batch_solution_final} into a recursive update rule that allows for incremental parameter merging.
We first represent the historical data distribution through a sufficient statistic, namely the cumulative feature covariance:
\begin{Definition}[Cumulative Feature Covariance]
For each layer $l$, the cumulative feature covariance matrix $\mathbf{H}_t^l \in \mathbb{R}^{d \times d}$ is defined as the unnormalized second-order moment of all input features encountered up to task $t$:
\begin{equation}
\mathbf{H}_t^l = \sum_{i=1}^t \mathbf{X}_i^{l\top} \mathbf{X}_i^l = \mathbf{H}_{t-1}^l + \mathbf{X}_t^{l\top} \mathbf{X}_t^l.
\end{equation}
The matrix $\mathbf{H}_t^l$ captures the relative importance of different parameter directions, enabling the model to remember historical feature structures without storing raw samples.
\end{Definition}

By maintaining the matrix $\mathbf{H}_t^l$, the optimal solution can be updated recursively as summarized in the following theorem:
\begin{theorem}[Recursive Least Squares (RLS) Solution]
\label{theorem:rls_merging}
The optimal merged task vector $\tau_t^{l*}$ that minimizes the cumulative feature drift up to task $t$ can be obtained recursively:
\begin{equation}
\label{eq:rls_update}
\tau_t^{l*} = \tau_{t-1}^{l*} + \mathbf{S}_t^l (\tau_t^l - \tau_{t-1}^{l*}), \quad \text{where } \mathbf{S}_t^l =  (\mathbf{H}_t^l)^{-1} (\mathbf{X}_t^{l\top} \mathbf{X}_t^l).
\end{equation}
Here, $\mathbf{S}_t^l$ denotes the gain matrix that adaptively modulates the task residual $(\tau_t^l - \tau_{t-1}^{l*})$. This update rule achieves exact mathematical equivalence to the batch least squares solution in Eq.~\eqref{eq:batch_solution_final} without requiring raw historical data.
\end{theorem}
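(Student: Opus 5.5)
The plan is to verify the recursion by direct algebra against the batch solution in Eq.~\eqref{eq:batch_solution_final}, using the recursive structure of $\mathbf{H}_t^l$ from the Definition of cumulative feature covariance. We drop the layer index $l$ and write $\mathbf{G}_i = \mathbf{X}_i^{\top}\mathbf{X}_i$, so that $\mathbf{H}_t = \mathbf{H}_{t-1} + \mathbf{G}_t$. We assume, as the batch formula already implicitly does, that each $\mathbf{H}_t$ with $t\ge 1$ is invertible. If needed, this can be enforced by a ridge term $\lambda I$ in $\mathbf{H}_0$; a ridge term leaves the recursion below unchanged provided $\tau_0^{*}=0$.

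\textbf{Base case.} For $t=1$ we have $\mathbf{S}_1 = \mathbf{H}_1^{-1}\mathbf{G}_1 = I$. With the convention $\tau_0^{*}=0$ and $\mathbf{H}_0=0$, the recursion gives $\tau_1^{*} = \tau_1$. This agrees with the batch formula $\mathbf{H}_1^{-1}\mathbf{G}_1\tau_1=\tau_1$.

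\textbf{Inductive step.} Introduce the accumulated moment $\mathbf{M}_t = \sum_{i=1}^t \mathbf{G}_i\tau_i$, so that the batch solution reads $\tau_t^{*} = \mathbf{H}_t^{-1}\mathbf{M}_t$. Equivalently, $\tau_t^{*}$ is characterized by the normal equations $\mathbf{H}_t\tau_t^{*}=\mathbf{M}_t$, obtained by setting the gradient of Eq.~\eqref{eq:lpm_optimization_goal} to zero. Assume $\mathbf{H}_{t-1}\tau_{t-1}^{*} = \mathbf{M}_{t-1}$. Then
\begin{equation*}
\mathbf{M}_t = \mathbf{M}_{t-1} + \mathbf{G}_t\tau_t = \mathbf{H}_{t-1}\tau_{t-1}^{*} + \mathbf{G}_t\tau_t = (\mathbf{H}_t - \mathbf{G}_t)\tau_{t-1}^{*} + \mathbf{G}_t\tau_t .
\end{equation*}
Multiplying on the left by $\mathbf{H}_t^{-1}$ gives
\begin{equation*}
\tau_t^{*} = \tau_{t-1}^{*} + \mathbf{H}_t^{-1}\mathbf{G}_t(\tau_t-\tau_{t-1}^{*}),
\end{equation*}
which is exactly Eq.~\eqref{eq:rls_update} with $\mathbf{S}_t=\mathbf{H}_t^{-1}\mathbf{G}_t$. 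Induction then gives equality with the batch solution for every $t$.

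\textbf{Exemplar-free claim.} Finally, we check what each step of the recursion needs. The update uses only $\tau_{t-1}^{*}$, the stored $d\times d$ matrix $\mathbf{H}_{t-1}$, and current-task quantities $\mathbf{X}_t$ and $\tau_t$. No $\mathbf{X}_i$ or $\tau_i$ with $i<t$ is required, so the update is exemplar-free.

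The only real obstacle is invertibility of $\mathbf{H}_t$. This can fail when the accumulated features are rank-deficient, for instance when few tokens are available in a high-dimensional layer. I would handle this either by stating invertibility as a standing assumption inherited from Eq.~\eqref{eq:batch_solution_final}, or by regularizing with $\mathbf{H}_0=\lambda I$. In the regularized case the same induction shows the recursion computes the minimizer of the regularized objective $\sum_{i=1}^t\|\mathbf{X}_i(\tau-\tau_i)\|_F^2+\lambda\|\tau\|_F^2$.
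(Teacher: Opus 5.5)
Your proposal is correct and matches the paper's proof essentially line for line. Your $\mathbf{M}_t$ is the paper's momentum $\mathbf{Q}_t$, and the paper uses the same steps: $\mathbf{Q}_{t-1}=\mathbf{H}_{t-1}\tau_{t-1}^{*}$ and $\mathbf{H}_{t-1}=\mathbf{H}_t-\mathbf{X}_t^{\top}\mathbf{X}_t$, followed by left-multiplication with $(\mathbf{H}_t)^{-1}$; your explicit base case, induction, and invertibility/ridge remarks spell out what the paper simply assumes.
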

\begin{proof}
See Appendix \ref{sec:appendix_rls} for the detailed algebraic derivation based on the matrix inversion lemma and quadratic optimization.
\end{proof}

Unlike simple averaging, this recursive solution offers a clear advantage by using feature strength to prioritize important updates. As explained via Singular Value Decomposition (SVD)~\cite{SVD} in Appendix \ref{sec:appendix_mechanism}, this method naturally puts updates where they are most useful and removes parts that cause conflicts to prevent critical information from being lost even when different tasks update the same parts of the model. During testing, the merged update $\tau_t^{l*}$ is added into the frozen weights $W_{\text{pre}}^l$ using a scaling factor $\lambda$:
\begin{equation}
W_{\text{final}}^l = W_{\text{pre}}^l + \lambda \tau_t^{l*}
\end{equation}
Adjusting $\lambda$ follows common practice in model merging~\cite{yadav2023ties,du2024parameter,sun2025towards} to balance new task knowledge with the pre-trained model for better performance across downstream continual learning tasks.

Additionally, to alleviate the storage burden, we avoid storing the full task vector $\tau_t^{l*}$ by merging it directly into the pre-trained weights. We recover the task vector only when it is needed by calculating the difference between the model and the pre-trained weights $W_{\text{pre}}^l$. Our method then focuses on managing the cumulative feature covariance $\mathbf{H}_t^l$ through two versions: \textbf{MAny} and \textbf{MAny$^*$}. In \textbf{MAny}, we maintain the full $d \times d$ matrix $\mathbf{H}_t^l$. To handle the matrix memory cost, \textbf{MAny$^*$} uses SVD to compress $\mathbf{H}_t^l$. We keep only the top-$r$ components where the sum of their squared singular values represents a sufficiently large portion of the total feature strength. Specifically, we choose the smallest $r$ such that the ratio meets an energy threshold $\gamma$ ($\frac{\sum_{i=1}^r \sigma_i^2}{\sum_{i=1}^d \sigma_i^2} \ge \gamma$), yielding the low-rank approximation $\mathbf{H}_t^l \approx \mathbf{U}_r^l \boldsymbol{\Sigma}_r^l \mathbf{V}_r^{l\top}$. This approach allows the model to remember historical feature structures using much less storage space while keeping the most useful information.

In summary, 
CPM maintains stable visual alignments through adaptive projector merging, while LPM recursively consolidates task vectors to resolve parameter conflicts. This dual-track design preserves both perceptual and structural knowledge across tasks and eliminates the need for expensive GPU training. The pseudo code is included in Algorithm~\ref{alg:lpm_cpm_full}.
\section{Experiment}
\subsection{Experiment Setup}
\textbf{Datasets.} We evaluate our framework on two MCIT benchmarks: UCIT \cite{guo2025hide} and MLLM-DCL \cite{zhao2025mllm}. UCIT benchmark rigorously assesses true learning capability without pre-training data contamination. Based on LLaVA's \cite{liu2023visual} zero-shot performance, it comprises six sequential tasks strictly uncorrelated with standard MLLM corpora: ArxivQA \cite{li2024multimodal}, CLEVR-Math \cite{lindstrom2022clevr}, IconQA \cite{lu2021iconqa}, ImageNet-R \cite{hendrycks2021many}, VizWiz-caption (Viz-cap) \cite{gurari2018vizwiz}, and Flickr30k \cite{plummer2015flickr30k}. Conversely, MLLM-DCL benchmark evaluates the sequential acquisition of specialized knowledge across five distinct professional domains: remote sensing (RSVQA \cite{lobry2020rsvqa}), medicine (PathVQA \cite{he2020pathvqa}), autonomous driving (DriveLM \cite{sima2024drivelm}), science (AI2D \cite{kembhavi2016diagram}, SciVerse \cite{guo2025sciverse}, MapQA \cite{chang2022mapqa}, TQA \cite{kembhavi2017you}), and finance (StockQA \cite{wang2023finvis}).

\paragraph{Evaluation Metrics:}
    We utilize three widely-recognized evaluation metrics for continual learning - Final Average Accuracy (FAA), Cumulative Average Accuracy (CAA), and Final Forgetting Measure (FFM) as detailed in~\cite{wang2024comprehensive}. We define the accuracy on the task $\mathcal{T}^j$ after learning the task $\mathcal{T}^i$ as $A_{ij}$. The average accuracy after learning task $\mathcal{T}^i$ is denoted as $AA_{i} = \frac{1}{i} \sum_{j=1}^i A_{i j}$. Upon completing all $n$ tasks, we report $\mathrm{FAA} = AA_{n}$, $\mathrm{CAA} = \frac{1}{n} \sum_{i=1}^n AA_{i}$, and $\mathrm{FFM} = \frac{1}{n-1} \sum_{j=1}^{n-1} (A_{jj} - A_{nj})$.
The FAA is a critical metric highlighting performance discrepancies between CL methods and joint learning. The CAA provides a comprehensive view of overall historical performance, and the FFM quantifies the model’s capability to mitigate forgetting.


\textbf{Implementation Details.} All algorithms are implemented using PyTorch~\cite{pytorch} and the MCITlib toolbox~\cite{guo2025mcitlib}, and are executed on an NVIDIA RTX A100 GPU cluster. We employ LLaVA-1.5-7B~\cite{liu2023visual} and InternVL-Chat-7B~\cite{chen2024internvl} as our foundational backbones, utilizing CLIP-L/14-336~\cite{radford2021learning} for visual feature extraction. 
For parameter-efficient fine-tuning, we inject LoRA modules with a rank of 16.
We set the scaling factor $\lambda$ to 3 and the energy threshold $\gamma$ in MAny$^*$ to 0.999. See Appendix~\ref{sup_implementation} for more details.

\textbf{Baseline.} We compare our MAny with recent MCIT methods, including LoRA-FT \cite{hu2022lora}, O-LoRA \cite{wang2023orthogonal}, MoELoRA \cite{chen2024coin}, ModalPrompt \cite{zeng2025modalprompt}, CL-MoE \cite{huai2025cl}, HiDe-LLaVA \cite{guo2025hide}, SEFE \cite{chen2025sefe}. The training process follows the replay-free continual learning setting \cite{chen2025sefe}.

\begin{table*}[t]
    \caption{Results on the UCIT benchmark using LLaVA-1.5-7B and InternVL-Chat-7B models.}
    \label{tab:main_results}
    \centering
    \resizebox{\textwidth}{!}{%
    \begin{tabular}{c l l c c c c c c | c c c}
        \toprule
        & Method & Venue & Image-R & ArxivQA & Viz-cap & IconQA & CLEVR & Flickr30k & FAA ($\uparrow$) & CAA ($\uparrow$) & FFM ($\downarrow$) \\
        \midrule
        
        \multirow{10}{*}{\rotatebox{90}{\textbf{LLaVA-1.5-7B}}} 
        & Zero-shot & -- & 16.27 & 57.73 & 38.39 & 19.20 & 20.63 & 41.88 & -- & -- & -- \\
        \cmidrule{2-12} 
        & LoRA-FT \cite{hu2022lora} & ICLR'22 & 65.50 & 76.23 & 44.41 & 66.37 & 54.87 & \textbf{57.80} & 60.86 & 67.61 & 17.44  \\
        & O-LoRA \cite{wang2023orthogonal} & EMNLP'23 & 83.40  & \textbf{94.20} & 41.51  & 58.50 & 56.80  & 53.37  & 64.63 & 75.98 & 8.03 \\
        & MoELoRA \cite{chen2024coin} & NeurIPS'24 & 68.57 & 77.73 & 44.26 & 65.43 & 45.20 & 57.31 & 59.75 & 75.57 & 17.54 \\
        & ModalPrompt \cite{zeng2025modalprompt} & EMNLP'25 & 51.07 & 87.27 & 48.11 & 39.23 & 46.57 & 42.93 & 52.53 & 57.67 & \textbf{0.15} \\
        & CL-MoE \cite{huai2025cl} & CVPR'25 & 67.80 & 73.47 & 44.58 & 69.37 & 47.80 & 57.40 & 60.07 & 74.89 & 18.25 \\
        & HiDE \cite{guo2025hide} & ACL'25 & 86.93 & 91.37 & 48.68 & 67.80 & 47.87 & 52.95 & 65.93 & 79.03 & 5.58 \\
        & SEFE \cite{chen2025sefe} & ICML'25 & 83.53  & 65.77 & 45.72  & 75.23 & 63.03 & 57.74 & 65.17 & 76.64 & 12.80  \\
        \cmidrule{2-12}
        & \textbf{MAny}$^*$ & \textbf{Ours} & \textbf{91.30} & 92.63 & 59.54 & 79.63 & \textbf{69.37} & 56.96 & \textbf{74.91} \textcolor{red}{\scriptsize (+8.98)} & \textbf{83.89} \textcolor{red}{\scriptsize (+4.86)} & 0.92 \\
        & \textbf{MAny} & \textbf{Ours} & 90.87 & 91.97 & \textbf{60.48} & \textbf{80.20} & 67.37 & 56.09 & 74.50 \textcolor{red}{\scriptsize (+8.57)} & 83.42 \textcolor{red}{\scriptsize (+4.39)} & 0.37 \\
        
        \midrule
        
        \multirow{10}{*}{\rotatebox{90}{\textbf{InternVL-Chat-7B}}} 
        & Zero-shot & -- & 21.10 & 63.20 & 40.59 & 24.70 & 21.20 & 44.67 & -- & -- & -- \\
        \cmidrule{2-12}
        & LoRA-FT \cite{hu2022lora} & ICLR'22 & 78.87 & 76.23 & 45.36 & 66.43 & 70.13 & \textbf{58.45} & 65.91 & 77.88 & 10.12 \\
        & O-LoRA \cite{wang2023orthogonal} & EMNLP'23 & 86.00 & \textbf{94.23} & 39.68 & 63.83 & 46.97 & 45.16 & 62.65 & 79.06 & 6.97 \\
        & MoELoRA \cite{chen2024coin} & NeurIPS'24 & 70.50 & 77.53 & 44.98 & 68.17 & 69.57 & 57.80 & 64.76 & 77.59 & 10.40 \\
        & ModalPrompt \cite{zeng2025modalprompt} & EMNLP'25 & 57.10 & 63.90 & 42.11 & 37.40 & 47.50 & 45.76 & 48.96 & 53.52 & \textbf{0.02} \\
        & CL-MoE \cite{huai2025cl} & CVPR'25 & 76.80  & 78.43  & 44.72  & 69.07  & \textbf{73.07} & 58.43 & 66.75 & 79.68 & 9.24 \\
        & HiDE \cite{guo2025hide} & ACL'25 & 89.33 & 91.07 & 49.55 & 67.80 & 63.63 & 55.62 & 69.50 & 81.33 & 2.03 \\
        & SEFE \cite{chen2025sefe} & ICML'25 & 86.77 & 78.13 & 45.90 & 70.67 & 69.80 & 58.18 & 68.24 & 79.87 & 6.18  \\
        \cmidrule{2-12}
        & \textbf{MAny}$^*$ & \textbf{Ours} & \textbf{94.60} & 81.80 & 57.86 & 70.97 & 69.20 & 56.62 & 71.84 \textcolor{red}{\scriptsize (+2.34)} & 83.10 \textcolor{red}{\scriptsize (+1.77)} & 3.00 \\
        & \textbf{MAny} & \textbf{Ours} & \textbf{94.60}  & 83.60  & \textbf{59.86}  & \textbf{71.93}  & 67.63  & 56.48 & \textbf{72.35} \textcolor{red}{\scriptsize (+2.85)} & \textbf{83.35} \textcolor{red}{\scriptsize (+2.02)} & 1.93 \\
        \bottomrule
    \end{tabular}%
    }
    \vspace{-0.7cm}
\end{table*}

\subsection{Main Results}
\label{sec:comparison_results}
Table \ref{tab:main_results} summarizes the core performance of MAny on the UCIT \cite{guo2025hide} benchmark, with full results and analysis on the more challenging MLLM-DCL benchmark deferred to the Table \ref{tab:mllm_dcl_results} in the Appendix~\ref{sec:appendix_mllm_dcl_results}. Across both LLaVA-1.5-7B and InternVL-Chat-7B, MAny consistently sets a new state-of-the-art (SOTA), resolving the core dilemma by achieving an unprecedented balance between plasticity and stability. Concretely, MAny outperforms the prior strongest baseline HiDE \cite{guo2025hide} by significant margins: +8.57\% in FAA on LLaVA-1.5-7B, with FFM reduced to an ultra-low 0.37. This superiority generalizes seamlessly to InternVL-Chat-7B, with a +2.85\% FAA gain over HiDE \cite{guo2025hide} and lower forgetting, validating robust cross-architecture generalization. Our memory-efficient variant MAny$^*$, which leverages SVD to drastically reduce storage overhead, fully retains the competitive performance of MAny without degradation. 

\subsection{Ablation Study}
\label{sec:ablation_study}

Table~\ref{tab:ablation_study} ablates MAny to validate our dual-track architecture. The naive baseline suffers from a severe $17.44\%$ FFM. Introducing CPM alone reduces FFM to $10.45\%$ and improves FAA to $68.71\%$, mitigating perception drift. Conversely, applying LPM alone drops FFM to $5.85\%$, resolving internal reasoning conflicts. Combining both modules in MAny yields the optimal FAA ($74.50\%$) and virtually eliminates forgetting, with highly consistent results on InternVL-Chat-7B. 
Further analyses on the synergy between perceptual and reasoning merging, the performance gap against theoretical bounds, and CPM's modality selection are detailed in Appendix~\ref{sec:appendix_lpm_bound} and~\ref{sec:appendix_cpm_modality}, respectively.

\begin{table}[t]
    \caption{Ablation study of different components in MAny on the UCIT benchmark.}
    \vspace{0cm}
    \label{tab:ablation_study}
    \centering
    \resizebox{0.75\textwidth}{!}{
    \begin{tabular}{@{} c c c c c c c c @{}}
        \toprule
        \multirow{2}{*}{\textbf{CPM}} & \multirow{2}{*}{\textbf{LPM}} & \multicolumn{3}{c}{\textbf{LLaVA-1.5-7B}} & \multicolumn{3}{c}{\textbf{InternVL-Chat-7B}} \\
        \cmidrule(lr){3-5} \cmidrule(lr){6-8}
        & & FAA ($\uparrow$) & CAA ($\uparrow$) & FFM ($\downarrow$) & FAA ($\uparrow$) & CAA ($\uparrow$) & FFM ($\downarrow$) \\
        \midrule
        & & 60.86 & 67.61 & 17.44 & 65.91 & 77.89 & 10.12 \\
        \checkmark & & 68.71 & 75.65 & 10.45 & 68.20 & 80.14 & 8.22 \\
        & \checkmark & 70.87 & 81.93 & 5.85 & 69.99 & 81.85 & 3.71 \\
        \checkmark & \checkmark & \textbf{74.50} & \textbf{83.42} &\textbf{ 0.37} & \textbf{72.35} & \textbf{83.35} & \textbf{1.77} \\
        \bottomrule
    \end{tabular}
    }
    \vspace{-0.6cm}
\end{table}

\subsection{Plug-and-Play Performance}
\label{sec:plug}

\begin{table}[htbp]
\vspace{-0.2cm}
    \caption{Comparison of different methods w/ and w/o the CPM module on the UCIT benchmark.}
    \vspace{0cm}
    \label{tab:cpm_comparison}
    \centering
    \renewcommand{\arraystretch}{0.72} 
    \setlength{\aboverulesep}{0.25ex}
    \setlength{\belowrulesep}{0.25ex}
    \setlength{\tabcolsep}{4pt} 
    \resizebox{0.75\textwidth}{!}{
    \begin{tabular}{@{} l c c c c c c @{}}
        \toprule
        \multirow{2}{*}{\textbf{Method}} & \multicolumn{3}{c}{\textbf{w/o CPM}} & \multicolumn{3}{c}{\textbf{w/ CPM}} \\
        \cmidrule(lr){2-4} \cmidrule(lr){5-7}
        & FAA ($\uparrow$) & CAA ($\uparrow$) & FFM ($\downarrow$) & FAA ($\uparrow$) & CAA ($\uparrow$) & FFM ($\downarrow$) \\
        \midrule
        Task-specific LoRA   & 73.99 & 82.56 & 4.40 & \textbf{76.98} & \textbf{84.01} & \textbf{0.23} \\
        LoRA-FT     & 63.09 & 73.27 & 18.64 & 68.71 & 75.65 & 10.45 \\
        O-LoRA      & 64.63 & 75.98 & 8.03 & 68.53 & 80.58 & 2.78 \\
        HiDE        & 65.93 & 79.04 & 5.58 & 68.93 & 80.23 & 1.96 \\
        SEFE        & 65.17 & 76.64 & 12.80 & 69.04    &79.37     & 5.76
        
        \\
        MAny (Ours) & 70.87 & 81.93 & 5.85 & 74.50 & 83.24 & 0.37 \\
        \bottomrule
    \end{tabular}
    }
    \vspace{-0.4cm}
\end{table}

Table~\ref{tab:cpm_comparison} underscores CPM's versatility. In the "w/o CPM" setting, baselines are forced to utilize the projector from the final task, which triggers severe "perception drift" (e.g., LoRA-FT's $18.64\%$ FFM). Notably, Task-specific LoRA with CPM achieves an FAA of $76.98\%$, outstripping the ideal task-specific oracle ($75.40\%$ detailed in Table~\ref{tab:lora_projector_comparison}). This super-oracle performance indicates that CPM’s adaptive soft-merging provides more robust cross-modal alignment than isolated task projectors. By mitigating the shared-projector bottleneck, CPM consistently reduces forgetting across various baselines. For example, it reduces HiDE's FFM from 5.58\% to 1.96\%.


\subsection{Further Analysis}
\label{sec:Further_study}

\begin{figure*}[htbp]
\vspace{-0.4cm}
    \centering
 \includegraphics[width=0.9\linewidth]{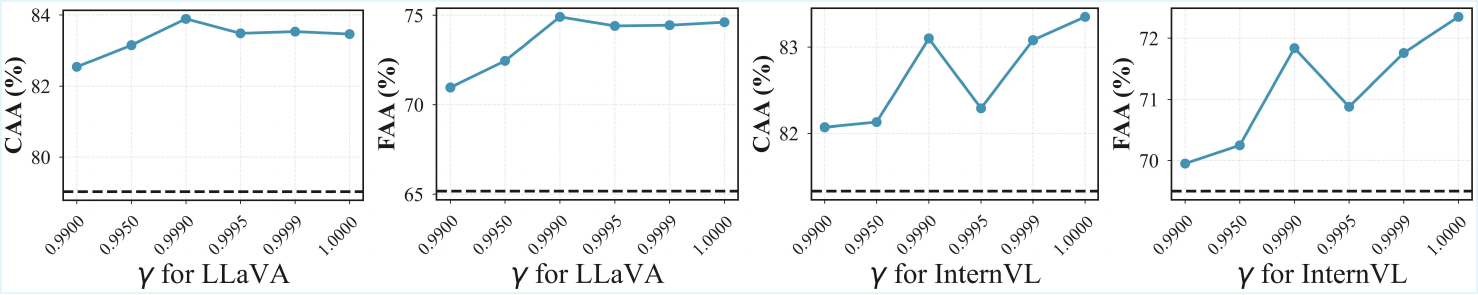}
 \vspace{-0.5cm}
\caption{Sensitivity of $\lambda$. Dashed lines denote the SOTA performance of all baseline.} 
    \label{fig:lambda_cropped}
     \vspace{-0.3cm}
\end{figure*}
\textbf{Sensitivity of the Scaling Factor $\lambda$.} 
Figure~\ref{fig:lambda_cropped} evaluates MAny's performance across various $\lambda$ values, where dashed lines indicate the current SOTA performance. Our recursive merging neutralizes interfering parameter directions, which naturally attenuates the magnitude of the merged task vector $\tau_t^{l*}$. Consequently, a larger scaling factor is required to amplify the purified knowledge against the frozen backbone. For both LLaVA and InternVL, MAny outperforms the SOTA over a broad range of $\lambda$ values. Notably, the FAA metric demonstrates exceptional robustness and maintains superior accuracy once $\lambda$ exceeds $2.0$. This saturation around $\lambda \approx 3.0$ confirms that an appropriate scaling factor effectively compensates for RLS-induced attenuation without compromising stability. We set $\lambda = 3$ as the default for all experiments. See Appendix~\ref{sensiety_gamma} for the hyperparameter $\gamma$ analysis.

\section{Conclusion}
In this work, We identified a neglected dual-forgetting phenomenon in MCIT involving perception drift in the projection space and reasoning collapse in the parameter space. To resolve these conflicts, we presented MAny, a training-free framework that decouples perception from reasoning through a dual-track merging design. CPM restores perceptual alignment via adaptive prototype guidance, while LPM ensures optimal reasoning consolidation using a recursive least squares mechanism. Extensive evaluations on the UCIT and MLLM-DCL benchmarks demonstrate that MAny consistently sets a new state-of-the-art across various MLLM architectures. By leveraging a training-free merging paradigm, MAny eliminates additional gradient-based optimization beyond naive task tuning, providing a robust and lightweight solution for multimodal continual instruction tuning.

\bibliography{iclr2025_conference}
\bibliographystyle{iclr2025_conference}

\appendix
\section{Appendix}

\begin{algorithm}[t]
	\caption{The Proposed MAny Method}
	\label{alg:lpm_cpm_full}
	\textbf{Input}: Pre-trained weights $\{W_{\text{pre}}^l, \Phi_v\}$, Task sequence $\{\mathcal{T}^t\}_{t=1}^n$, Scaling factor $\lambda$, Temperature $\eta$\\
	\textbf{Output}: Merged task vectors $\{\tau_n^{l*}\}_{l=1}^L$, Task-specific Projectors $\{\mathcal{P}_{i}\}_{i=1}^n$, and Prototypes $\{\mathbf{\mu}_i\}_{i=1}^n$
	\begin{algorithmic}[1] 
		\STATE \textbf{Training Phase:}
		\STATE Initialize cumulative covariance $\mathbf{H}_0^l = \mathbf{0}$ and global task vector $\tau_0^{l*} = \mathbf{0}$ for each layer $l \in \{1, \dots, L\}$
		\FOR {$t = 1$ \textbf{to} $n$}
		\STATE \textbf{1. Task-specific Training}: 
        \STATE Fine-tune LoRA weights $\tau_t^l$ and task-specific projecotr $\mathcal{P}_{t}$ on $\mathcal{T}^t$
		
		\STATE \textbf{2. Perceptual Track (Cross-modal Projection Merging)}: 
		\STATE \quad Extract [CLS] features $\mathbf{Z}_v^{\text{g}}$ from the last layer of $\Phi_v$
		\STATE \quad Compute task-representative prototype: $\mathbf{\mu}_t = \frac{1}{N_t} \sum_{j=1}^{N_t} \mathbf{Z}_v^{\text{g}, j}$ 
		\STATE \quad Store $\{\mathcal{P}_{t}, \mathbf{\mu}_t\}$ for test-time adaptive merging
		
		\STATE \textbf{3. Reasoning Track (Low-rank Parameter Merging)}:
		\FOR {each reasoning layer $l = 1$ \textbf{to} $L$}
		\STATE \quad Update cumulative feature covariance: $\mathbf{H}_t^l = \mathbf{H}_{t-1}^l + \mathbf{X}_t^{l\top} \mathbf{X}_t^l$
		\STATE \quad Compute gain matrix: $\mathbf{S}_t^l = (\mathbf{H}_t^l)^{-1} (\mathbf{X}_t^{l\top} \mathbf{X}_t^l)$
		\STATE \quad Recursive parameter merging: $\tau_t^{l*} = \tau_{t-1}^{l*} + \mathbf{S}_t^l (\tau_t^l - \tau_{t-1}^{l*})$
		\ENDFOR
		\ENDFOR

		\STATE \textbf{Inference Phase:}
		\STATE Extract global feature $\mathbf{Z}_v^{\text{g}}$ and spatial tokens $\mathbf{Z}_v^{\text{s}}$ from $f_v(x_v; \Phi_v)$
		\STATE Compute similarity between $\mathbf{Z}_v^{\text{g}}$ and $\mathbf{\mu}_i$ via Eq.~\eqref{eq:similarity}
		\STATE Calculate merge projected tokens $\mathbf{P}_v$ according to Eq.~\eqref{eq:merging_projection}
		\STATE Construct reasoning weights $\{W_{\text{final}}^l\}_{l=1}^L$ where $W_{\text{final}}^l = W_{\text{pre}}^l + \lambda \tau_n^{l*}$
		\STATE \textbf{return} $\mathbf{P}_v$ and $\{W_{\text{final}}^l\}_{l=1}^L$ to generate answer
	\end{algorithmic}
\end{algorithm}

\subsection{Derivation and Equivalence Proof of Theorem~\ref{theorem:rls_merging}}
\label{sec:appendix_rls}
This appendix provides the formal algebraic derivation of the Recursive Least Squares (RLS) merging rule introduced in Sec.~\ref{sec:Recursive_Merging}. We prove that the incremental update in Theorem~\ref{theorem:rls_merging} is mathematically equivalent to the batch least squares solution.

For a specific layer $l$, let $\mathbf{H}_t^l \in \mathbb{R}^{d \times d}$ be the cumulative feature covariance and momentum $\mathbf{Q}_t^l \in \mathbb{R}^{d \times r}$ be the cumulative weighted task vector. These sufficient statistics are defined as:
\begin{equation}
\label{eq:appendix_def}
\mathbf{H}_t^l = \sum_{i=1}^t \mathbf{X}_i^{l\top} \mathbf{X}_i^l, \quad \mathbf{Q}_t^l = \sum_{i=1}^t  \mathbf{X}_i^{l\top} \mathbf{X}_i^l \tau_i^l.
\end{equation}
As derived in Eq.~\eqref{eq:batch_solution_final}, the optimal batch solution $\tau_t^{l*}$ that minimizes the cumulative feature drift is given by:
\begin{equation}
\label{eq:appendix_batch}
\tau_t^{l*} = (\mathbf{H}_t^l)^{-1} \mathbf{Q}_t^l.
\end{equation}

Upon encountering the $t$-th task, the statistics $\mathbf{H}_t^l$ and $\mathbf{Q}_t^l$ can be updated from their previous states at task $t-1$:
\begin{align}
\mathbf{H}_t^l &= \mathbf{H}_{t-1}^l + \mathbf{X}_t^{l\top} \mathbf{X}_t^l, 
\label{eq:recur_h} \\
\mathbf{Q}_t^l &= \mathbf{Q}_{t-1}^l + \mathbf{X}_t^{l\top} \mathbf{X}_t^l \tau_t^l. \label{eq:recur_q}
\end{align}
From Eq.~\eqref{eq:appendix_batch}, we have the relationship $\mathbf{Q}_{t-1}^l = \mathbf{H}_{t-1}^l \tau_{t-1}^{l*}$.

To derive the update rule for $\tau_t^{l*}$, we substitute the recurrence \eqref{eq:recur_q} into the batch form \eqref{eq:appendix_batch}:
\begin{equation}
\begin{aligned}
\tau_t^{l*} &= (\mathbf{H}_t^l)^{-1} \mathbf{Q}_t^l = (\mathbf{H}_t^l)^{-1} \left( \mathbf{Q}_{t-1}^l + \mathbf{X}_t^{l\top} \mathbf{X}_t^l \tau_t^l \right) \\
&= (\mathbf{H}_t^l)^{-1} \left( \mathbf{H}_{t-1}^l \tau_{t-1}^{l*} + \mathbf{X}_t^{l\top} \mathbf{X}_t^l \tau_t^l \right).
\end{aligned}
\end{equation}
By rearranging Eq.~\eqref{eq:recur_h} as $\mathbf{H}_{t-1}^l = \mathbf{H}_t^l - \mathbf{X}_t^{l\top} \mathbf{X}_t^l$, we can eliminate the historical covariance $\mathbf{H}_{t-1}^l$:
\begin{equation}
\begin{aligned}
\tau_t^{l*} &= (\mathbf{H}_t^l)^{-1} \left[ (\mathbf{H}_t^l - \mathbf{X}_t^{l\top} \mathbf{X}_t^l) \tau_{t-1}^{l*} + \mathbf{X}_t^{l\top} \mathbf{X}_t^l \tau_t^l \right] \\
&= (\mathbf{H}_t^l)^{-1} \mathbf{H}_t^l \tau_{t-1}^{l*} - (\mathbf{H}_t^l)^{-1} \mathbf{X}_t^{l\top} \mathbf{X}_t^l \tau_{t-1}^{l*} + (\mathbf{H}_t^l)^{-1} \mathbf{X}_t^{l\top} \mathbf{X}_t^l \tau_t^l \\
&= \tau_{t-1}^{l*} + (\mathbf{H}_t^l)^{-1} (\mathbf{X}_t^{l\top} \mathbf{X}_t^l) (\tau_t^l - \tau_{t-1}^{l*}).
\end{aligned}
\end{equation}

Defining the gain matrix as $\mathbf{S}_t^l = (\mathbf{H}_t^l)^{-1} (\mathbf{X}_t^{l\top} \mathbf{X}_t^l)$, we arrive at the final recursive fusion rule:
\begin{equation}
\tau_t^{l*} = \tau_{t-1}^{l*} + \mathbf{S}_t^l (\tau_t^l - \tau_{t-1}^{l*}).
\end{equation}
This derivation rigorously confirms that the incremental update is mathematically identical to the joint batch optimum. Crucially, this equivalence is maintained without requiring the raw historical features $\mathbf{X}_{1:t-1}^l$ storage, as all necessary information is compressed within the sufficient statistic $\mathbf{H}_t^l$.

\subsection{Theoretical Mechanism and Subspace Analysis of Low-Rank Parameter Merging}
\label{sec:appendix_mechanism}
As established in Appendix~\ref{sec:appendix_rls}, our recursive task vector fusion rule yields an exact mathematical equivalent to the joint batch optimal solution $\tau_t^{l*} = ( \sum_{i=1}^t \mathbf{X}_i^{l\top} \mathbf{X}_i^l )^{-1} \sum_{i=1}^t \mathbf{X}_i^{l\top} \mathbf{X}_i^l \tau_i^l$. To elucidate the underlying mechanics of this approach, we present a theoretical analysis focusing on the fusion behavior of matrix-multiplication parameters.Specifically, we apply SVD to the input feature matrix of a given task $i$: $\mathbf{X}_i^l = \mathbf{U}_i^l \boldsymbol{\Sigma}_i^l \mathbf{V}_i^{l\top}$. To rigorously handle potential rank deficiency in feature spaces, we employ the Moore-Penrose pseudoinverse ($^\dagger$) in the subsequent derivations. By analyzing two extreme cases of feature distribution, we uncover the fusion dynamics at play.

\textbf{Ideal case (Mutually orthogonal features).}  Assume that for any pair of distinct tasks $i \neq j$, their right singular vectors satisfy the strict orthogonality condition ($\mathbf{V}_i^{l\top} \mathbf{V}_j^l = \mathbf{0}$). Under this condition, the batch optimal solution simplifies significantly to the summation of task vectors projected onto their respective feature subspaces:
\begin{equation}
\begin{aligned}
\tau_{t,\text{ideal}}^{l*} &= \left( \sum_{i=1}^t \mathbf{V}_i^l (\boldsymbol{\Sigma}_i^l)^2 \mathbf{V}_i^{l\top} \right)^\dagger \sum_{j=1}^t \mathbf{V}_j^l (\boldsymbol{\Sigma}_j^l)^2 \mathbf{V}_j^{l\top} \tau_j^l \\
&= \sum_{i=1}^t \left( \mathbf{V}_i^l (\boldsymbol{\Sigma}_i^l)^2 \mathbf{V}_i^{l\top} \right)^\dagger \left( \mathbf{V}_i^l (\boldsymbol{\Sigma}_i^l)^2 \mathbf{V}_i^{l\top} \right) \tau_i^l = \sum_{i=1}^t \mathbf{V}_i^l \mathbf{V}_i^{l\top} \tau_i^l.
\end{aligned}
\end{equation}
Here, $\mathbf{V}_i^l \mathbf{V}_i^{l\top} \tau_i^l$ represents the orthogonal projection of the task vector $\tau_i^l$ onto the subspace spanned by the singular vectors $\mathbf{V}_i^l$. This projection mechanism exclusively retains the components of $\tau_i^l$ that align with the corresponding feature space $\mathbf{X}_i^l$, effectively filtering out irrelevant update directions.Consequently, in this ideal orthogonal setting, where $\mathbf{V}_i^{l\top} \mathbf{V}_j^l = \mathbf{0}$ for any $i \neq j$, the optimal fusion strategy achieves lossless knowledge integration. We prove that the cumulative layer-wise feature drift remains strictly zero by substituting the ideal solution $\tau_{t,\text{ideal}}^{l*} = \sum_{j=1}^t \mathbf{V}_j^l \mathbf{V}_j^{l\top} \tau_j^l$ into the objective:
\begin{equation}
\begin{aligned}
\sum_{i=1}^t \| \mathbf{X}_i^l (\tau_{t,\text{ideal}}^{l*} - \tau_i^l) \|_F^2 &= \sum_{i=1}^t \left\| \mathbf{U}_i^l \boldsymbol{\Sigma}_i^l \mathbf{V}_i^{l\top} \left( \sum_{j=1}^t \mathbf{V}_j^l \mathbf{V}_j^{l\top} \tau_j^l - \tau_i^l \right) \right\|_F^2 \\
&= \sum_{i=1}^t \| \mathbf{U}_i^l \boldsymbol{\Sigma}_i^l \sum_{j=1}^t (\mathbf{V}_i^{l\top} \mathbf{V}_j^l) \mathbf{V}_j^{l\top} \tau_j^l - \mathbf{U}_i^l \boldsymbol{\Sigma}_i^l \mathbf{V}_i^{l\top} \tau_i^l \|_F^2 \\
&= \sum_{i=1}^t \| \mathbf{U}_i^l \boldsymbol{\Sigma}_i^l \mathbf{V}_i^{l\top} \tau_i^l - \mathbf{U}_i^l \boldsymbol{\Sigma}_i^l \mathbf{V}_i^{l\top} \tau_i^l \|_F^2 = 0.
\end{aligned}
\end{equation}

\textbf{Worst case (Fully collinear features).} Conversely, we consider the opposite extreme where all task features share an identical set of right singular vectors, i.e., $\mathbf{V}_i^l = \mathbf{V}^l$ for all $i \in \{1, \dots, t\}$. This scenario represents a state of maximum interference, where tasks compete for the same parameter directions. Under this highly overlapping condition, the batch optimal solution degenerates into an adaptive weighted average within the shared subspace:
\begin{equation}
\begin{aligned}
\tau_{t,\text{worst}}^{l*} &= \left( \sum_{i=1}^t \mathbf{V}^l (\boldsymbol{\Sigma}_i^l)^2 \mathbf{V}^{l\top} \right)^\dagger \sum_{i=1}^t \mathbf{V}^l (\boldsymbol{\Sigma}_i^l)^2 \mathbf{V}^{l\top} \tau_i^l \\
&= \mathbf{V}^l \left( \sum_{j=1}^t (\boldsymbol{\Sigma}_j^l)^2 \right)^\dagger \mathbf{V}^{l\top} \sum_{i=1}^t \mathbf{V}^l (\boldsymbol{\Sigma}_i^l)^2 \mathbf{V}^{l\top} \tau_i^l \\
&= \sum_{i=1}^t \mathbf{V}^l \left[ \left( \sum_{j=1}^t (\boldsymbol{\Sigma}_j^l)^2 \right)^\dagger (\boldsymbol{\Sigma}_i^l)^2 \right] \mathbf{V}^{l\top} \tau_i^l.
\end{aligned}
\end{equation}
Here, the term $\mathbf{V}^l [ ( \sum_{j=1}^t (\boldsymbol{\Sigma}_j^l)^2 )^\dagger (\boldsymbol{\Sigma}_i^l)^2 ] \mathbf{V}^{l\top}$ acts as a normalized importance weight in the spectral domain. 
Unlike the ideal case, feature drift becomes inevitable due to the structural overlap of task features. However, our recursive formulation ensures that the merged vector $\tau_t^{l*}$ is not a naive arithmetic average, but an optimal synthesis that prioritizes task components with higher spectral energy. By adaptively weighting updates according to their respective singular values, LPM ensures that the parameter directions most critical to historical knowledge are preserved with high fidelity, even under severe collinearity. This mechanism effectively minimizes the cumulative reconstruction error of task features within the shared parameter subspace, thereby mitigating catastrophic interference and safeguarding the structural integrity of the model's reasoning capabilities.

These cases theoretically show that the cumulatively optimized $\tau_t^{l*}$ strikes an intrinsic balance between task-specific specialization and shared structural patterns. By locally projecting independent transformations within orthogonal subspaces while robustly synthesizing overlapping features, LPM yields a theoretically-grounded consolidated vector that encapsulates the essence of the task stream.

\subsection{Implementation Details}
\label{sup_implementation}
On the UCIT benchmark, all methods are trained for 1 epoch with a batch size of 16. The learning rates for LLaVA and InternVL are set to 2e-4 and 1e-4, respectively. On the MLLM-DCL benchmark, the batch size is reduced to 8, with learning rates of 2e-5 for LLaVA and 1e-4 for InternVL. Training epochs for this benchmark are task-specific: 3 epochs for the Medicine task, 2 epochs for the Science task, and 1 epoch for all remaining tasks.

\subsection{Detailed Results on the MLLM-DCL Benchmark}
\label{sec:appendix_mllm_dcl_results}
\begin{table*}[t]
    \caption{Results on the MLLM-DCL benchmark using LLaVA-1.5-7B and InternVL-Chat-7B models.}
    \label{tab:mllm_dcl_results}
    \centering
    \resizebox{\textwidth}{!}{%
    \begin{tabular}{c l l c c c c c | c c c}
        \toprule
        & Method & Venue & RS & Med & AD & Sci & Fin & FAA ($\uparrow$) & CAA ($\uparrow$) & FFM ($\downarrow$) \\
        \midrule
        
        \multirow{10}{*}{\rotatebox{90}{\textbf{LLaVA-1.5-7B}}} 
        & Zero-shot & -- & 32.29 & 28.28 & 15.59 & 35.55 & 62.56 & -- & -- & -- \\
        \cmidrule{2-11} 
        & LoRA-FT \cite{hu2022lora} & ICLR'22 & 74.13 & 47.63 & 33.53 & 41.42 & 89.66 & 57.27 & 63.23 & 11.43 \\
        & O-LoRA \cite{wang2023orthogonal} & EMNLP'23 & 74.79 & 39.78 & 30.68 & 39.11 & 83.34 & 53.54 & 59.34 & 7.03 \\
        & MoELoRA \cite{chen2024coin} & NeurIPS'24 & 76.96 & 47.09 & 31.19 & 41.96 & 89.74 & 57.39 & 63.11 & 11.65 \\
        & ModalPrompt \cite{zeng2025modalprompt} & EMNLP'25 & 47.92 & 29.10 & 13.35 & 32.77 & 89.03 & 42.44 & 49.75 & 30.52 \\
        & CL-MoE \cite{huai2025cl} & CVPR'25 & 73.05 & 48.00 & 30.27 & 41.48 & \textbf{90.11} & 56.58 & 62.81 & 12.74 \\
        & HiDE \cite{guo2025hide} & ACL'25 & 70.38 & 38.58 & 27.60 & 35.11 & 60.90 & 46.51 & 58.39 & 11.81 \\
        & SEFE \cite{chen2025sefe} & ICML'25 & 76.38  & 47.78  & 40.16  & 45.64 & 89.50 & 59.82  & 65.13 & 7.98 \\
        \cmidrule{2-11}
        & \textbf{MAny}$^*$ & \textbf{Ours} & 79.58 & \textbf{59.02} & \textbf{49.56} & \textbf{49.53} & 89.62 & \textbf{65.46} \textcolor{red}{\scriptsize (+5.64)} & \textbf{66.82} \textcolor{red}{\scriptsize (+1.69)} & -1.43 \\
        & \textbf{MAny} & \textbf{Ours} & \textbf{79.72} & 58.85 & 47.63 & 49.39 & 89.43 & 65.00 \textcolor{red}{\scriptsize (+5.18)} & 66.43 \textcolor{red}{\scriptsize (+1.30)} & \textbf{-1.61} \\
        
        \midrule
        
        \multirow{10}{*}{\rotatebox{90}{\textbf{InternVL-Chat-7B}}} 
        & Zero-shot & -- & 31.16 & 29.81 & 14.06 & 33.93 & 64.32 & -- & -- & -- \\
        \cmidrule{2-11}
        & LoRA-FT \cite{hu2022lora} & ICLR'22 & 77.39 & 53.32 & 35.12 & 43.50 & 91.05 & 60.08 & 67.35 & 11.58 \\
        & O-LoRA \cite{wang2023orthogonal} & EMNLP'23 & 76.93 & 40.32 & 29.55 & 34.61 & 72.39 & 50.76 & 63.14 & 12.46 \\
        & MoELoRA \cite{chen2024coin} & NeurIPS'24 & 74.41 & 53.89 & 39.67 & 43.46 & \textbf{91.27} & 60.54 & 67.35 & 10.24 \\
        & ModalPrompt \cite{zeng2025modalprompt} & EMNLP'25 & 58.19 & 28.76 & 13.77 & 34.41 & 65.32 & 40.09 & 41.91 & 0.07  \\
        & CL-MoE \cite{huai2025cl} & CVPR'25 & 68.10 & 47.26 & 30.92 & 40.55 & 89.40 & 55.25 & 63.20 & 15.01   \\
        & HiDE \cite{guo2025hide} & ACL'25 & 77.73 & 58.10 & 35.87 & \textbf{45.55} & 90.14 & 61.48 & 66.77 & 5.81  \\
        & SEFE \cite{chen2025sefe} & ICML'25 & 78.57 & 55.09 & 45.27 & 44.88 & 90.29 & 62.82 & 67.85 & 8.33 \\
        \cmidrule{2-11}
        & \textbf{MAny}$^*$ & \textbf{Ours} & \textbf{81.25} & \textbf{65.44} & \textbf{50.84} & 45.43 & 90.55 & \textbf{66.70} \textcolor{red}{\scriptsize (+3.88)} & \textbf{69.18} \textcolor{red}{\scriptsize (+1.33)} & -0.26 \\
        & \textbf{MAny} & \textbf{Ours} & \textbf{81.25} & 65.13 & 50.28 & 44.99 & 90.55 & 66.44 \textcolor{red}{\scriptsize (+3.62)} & 68.90 \textcolor{red}{\scriptsize (+1.05)} & \textbf{-0.56} \\
        \bottomrule
    \end{tabular}%
    }
\end{table*}

The MLLM-DCL benchmark focuses on continual instruction tuning for MLLMs across highly specialized vertical domains, where significant cross-task distribution shifts pose stringent challenges for mitigating catastrophic forgetting. As summarized in Table \ref{tab:mllm_dcl_results}, our MAny framework consistently establishes a new SOTA across both tested backbones, achieving unprecedented anti-forgetting capabilities that substantially outperform all existing baselines. On LLaVA-1.5-7B, MAny outperforms the prior strongest baseline, SEFE \cite{chen2025sefe}, by an absolute margin of +5.18\% in FAA. More notably, it achieves a negative FFM of -1.61\%---the only method across all competitors to do so---indicating that our framework not only effectively eliminates catastrophic forgetting, but successfully enables positive backward transfer to enhance old task performance during sequential learning. This consistent superiority extends to InternVL-Chat-7B, where MAny surpasses SEFE by +3.62\% in FAA and delivers an ultra-low FFM of -0.56\%, in stark contrast to the prominent forgetting exhibited by SEFE (8.33\%) and HiDE (5.81\%). These results validate that our decoupled dual-merging design effectively resolves the pervasive plasticity-stability dilemma: whereas prior works either suffer from severe forgetting when pursuing plasticity (e.g., LoRA-FT and CL-MoE with FFM > 10\%) or drastically sacrifice new task performance to maintain stability (e.g., ModalPrompt, whose FAA drops by over 20\% compared to SOTA), MAny excels at both. Furthermore, our memory-efficient variant, MAny$^*$, retains the full model's exceptional capabilities, achieving even slightly higher FAA scores (65.46\% on LLaVA-1.5-7B and 66.70\% on InternVL-Chat-7B).
\subsection{The Performance Gap and Synergy between Perceptual and Reasoning Merging}
\label{sec:appendix_lpm_bound}
\begin{table}[htbp]
    \caption{
    Under the two settings of the projector corresponding to the task and the merged projector of CPM, the impact of using the corresponding task's LoRA and the LPM's Merged LoRA and the last task LoRA using LLaVA-1.5-7B.
    }
    \label{tab:lora_projector_comparison}
    \centering
    \renewcommand{\arraystretch}{0.72} 
    \setlength{\aboverulesep}{0.25ex}
    \setlength{\belowrulesep}{0.25ex}
    \setlength{\tabcolsep}{3.5pt} 
    \resizebox{\textwidth}{!}{%
    \begin{tabular}{@{} l l c c c c c c c c c @{}}
        \toprule
        \multirow{2}{*}{\textbf{Projector Setting}} & \multirow{2}{*}{\textbf{LoRA Setting}} & \multicolumn{6}{c}{\textbf{UCIT}} & \multicolumn{3}{c}{\textbf{Metrics}} \\
        \cmidrule(lr){3-8} \cmidrule(l){9-11}
        & & Img-R & Arxiv & VizCap & Icon & CLEVR & F30k & FAA ($\uparrow$) & CAA ($\uparrow$) & FFM ($\downarrow$) \\
        \midrule
        \multirow{3}{*}{\textbf{Task Projector}} 
        & Task LoRA (Oracle)  & \textbf{91.10} & \textbf{91.77} & \textbf{60.45} & 76.17 & \textbf{75.10} & 57.80 & \textbf{75.40} & - & - \\
        & Merged LoRA       & 88.83 & 89.43 & 56.29 & 75.43 & 58.57 & 56.77 & 70.89 & \textbf{82.05} & \textbf{5.92} \\
        & Final Task LoRA & 81.20 & 76.40 & 46.49 & \textbf{78.63} & 74.70 & \textbf{57.99} & 69.24 & 77.80 & 10.08 \\
        \midrule
        \multirow{3}{*}{\textbf{Merged Projector}} 
        & Task LoRA (Oracle)  & \textbf{91.63} & 91.10 & \textbf{60.79} & \textbf{83.07} & \textbf{77.17} & 58.13 & \textbf{76.98} & \textbf{84.01} & \textbf{0.23} \\
        & Merged LoRA        & 90.87 & \textbf{91.97} & 60.48 & 80.20 & 67.37 & 56.09 & 74.50 & 83.24 & 0.37 \\
        & Final Task LoRA & 78.43 & 76.73 & 46.03 & 79.30 & 73.50 & \textbf{58.28} & 68.71 & 75.65 & 10.45 \\
        \bottomrule
    \end{tabular}%
    }
\end{table}

To deeply evaluate the knowledge consolidation capability of LPM, we compare it against two extreme settings in Table~\ref{tab:lora_projector_comparison}: (1) \textbf{Task LoRA}, which serves as an ideal Oracle upper bound requiring ground-truth task IDs at test time, and (2) \textbf{Final LoRA}, representing the lower bound where only the LoRA weights from the last trained task are used for inference. Under the CPM setting, LPM (FAA $74.50\%$, FFM $0.37\%$) demonstrates exceptional performance retention, closely approximating the Oracle Task LoRA (FAA $76.98\%$, FFM $0.23\%$) while vastly outperforming the naive Final LoRA integration. This establishes that the recursive least squares parameter merging mechanism in LPM can almost perfectly parse and preserve the structural reasoning weights of sequential tasks, even under strict exemplar-free and task-agnostic constraints.

\subsection{Modality Choices for CPM Prototype Construction}
\label{sec:appendix_cpm_modality}

\begin{table}[htbp]
    \caption{Comparison of modality choices for CPM prototype construction.}
    \label{tab:cpm_construction}
    \centering
    \resizebox{\textwidth}{!}{%
    \begin{tabular}{@{} l c c c c c c c c c @{}}
        \toprule
        \multirow{2}{*}{\textbf{Modality}} & \multicolumn{6}{c}{\textbf{UCIT}} & \multicolumn{3}{c}{\textbf{Metrics}} \\
        \cmidrule(lr){2-7} \cmidrule(l){8-10}
        & Img-R & Arxiv & VizCap & Icon & CLEVR & F30k & FAA ($\uparrow$) & CAA ($\uparrow$) & FFM ($\downarrow$) \\
        \midrule
        
        Text  & \textbf{91.20} & \textbf{92.67} & 57.81 & \textbf{80.70} & 67.67 & \textbf{56.32} & 74.40 & \textbf{83.48} & 0.70 \\
        Text + Image   & 91.03 & 92.63 & 58.89 & 80.23 & \textbf{68.57} & 56.29 & \textbf{74.61} & 83.46 & 0.38 \\
        Image (Ours)   & 90.87 & 91.97 & \textbf{60.48} & 80.20 & 67.37 & 56.09 & 74.50 & 83.24 & \textbf{0.37} \\
        
        \bottomrule
    \end{tabular}%
    }
\end{table}

In Section~\ref{sec:method_routing}, we opted to construct the task-representative prototypes $\mathbf{\mu}_t$ using solely visual features. We ablate this design choice in Table~\ref{tab:cpm_construction} by comparing prototypes built from text-only instructions, fused text-image features, and image-only features. To extract these multimodal embeddings, we consistently adopt CLIP-L/14-336~\cite{radford2021learning}. The results indicate highly consistent overall performance across all three modality choices, with FAA ranging tightly between 74.4\% and 74.6\% alongside minimal forgetting. This implies that the tasks exhibit sufficient distinctiveness in both their visual distributions and textual instructions. However, the image-only approach presents a distinct computational advantage: during inference, it completely frees the model from forwarding textual instructions through an additional text encoder to execute routing decisions. Consequently, our vision-centric design maximizes computational efficiency without sacrificing routing accuracy, ultimately yielding the optimal FFM of 0.37\%.

\subsection{Sensitivity of the Energy Threshold $\gamma$}
\label{sensiety_gamma}

\begin{figure*}[htbp]
    \centering
 \includegraphics[width=1.0\linewidth]{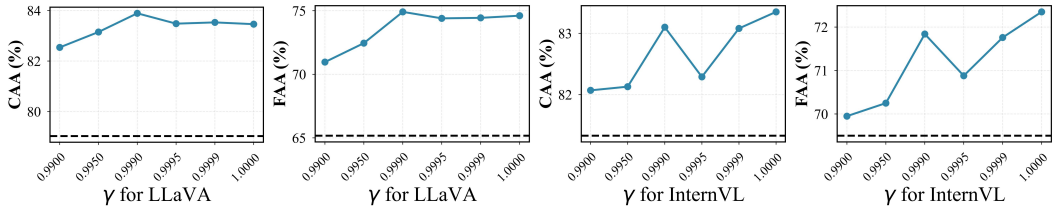}
\vspace{-0.7cm}
\caption{Impact of $\gamma$. Dashed lines denote the SOTA performance of all baseline.} 
    \label{fig:gamma_cropped}
\end{figure*}

Figure~\ref{fig:gamma_cropped} evaluates MAny$^*$ across varying energy thresholds, where dashed lines represent the SOTA. MAny$^*$ consistently outperforms the SOTA across the entire range. Notably, a performance peak emerges at $\gamma = 0.9990$, which surprisingly surpasses the uncompressed full-rank setting ($\gamma = 1.0$). This indicates that truncating long-tail singular values acts as an implicit regularizer by filtering noisy directions that exacerbate cross-task interference. This optimal configuration reduces storage overhead to merely 1/27 of the original requirement while achieving superior accuracy and stability. We therefore set $\gamma = 0.9990$ as the default for MAny$^*$.

\end{document}